  \providecommand\BibTeX{{%
    \normalfont B\kern-0.5em{\scshape i\kern-0.25em b}\kern-0.8em\TeX}}}
\newcommand{\shortname}{CEMSP}
\newcommand{\fullname}{Counterfactual Explanations with Minimal Satisfiable Perturbation (CEMSP) }
\newtheorem{theorem}{Theorem}
\newtheorem{prop}{Proposition}
\newtheorem{definition}{Definition}
\newtheoremstyle{case}{}{}{}{}{}{:}{ }{}
\theoremstyle{case}
\newtheorem{case}{Case}
\numberwithin{subcase}{case}
\gdef\@copyrightpermission{
   \begin{minipage}{0.3\columnwidth}
     \href{https://creativecommons.org/licenses/by-nc-sa/4.0/}{\includegraphics[width=0.90\textwidth]{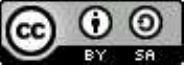}}
   \end{minipage}\hfill
   \begin{minipage}{0.7\columnwidth}
     \href{https://creativecommons.org/licenses/by-nc-sa/4.0/}{This work is licensed under a Creative Commons Attribution-NonCommercial-ShareAlike International 4.0 License.}
   \end{minipage}
   \vspace{5pt}
}
\begin{document}

\title{Flexible and Robust Counterfactual Explanations with Minimal Satisfiable Perturbations}

\author{Yongjie Wang}
\orcid{0000-0003-4718-0742}
\email{yongjie002@e.ntu.edu.sg}
\affiliation{%
  \institution{Nanyang Technological University}
  \country{Singapore}
}

\author{Hangwei Qian}
\orcid{0000-0003-4831-0748}
\email{qian_hangwei@cfar.a-star.edu.sg}
\affiliation{%
  \institution{A*STAR}
  \country{Singapore}
}

\author{Yongjie Liu}
\orcid{0009-0007-7935-8601}
\email{202215223@mail.sdu.edu.cn}
\affiliation{%
  \institution{Shandong University}
  \country{China}
}

\author{Wei Guo}
\orcid{0000-0002-8124-5186}
\email{guowei@sdu.edu.cn}
\affiliation{%
  \institution{Shandong University}
  \country{China}
}

\author{Chunyan Miao}
\orcid{0000-0002-0300-3448}
\email{ASCYMiao@ntu.edu.sg}
\affiliation{%
  \institution{Nanyang Technological University}
  \country{Singapore}
}

\renewcommand{\shortauthors}{Yongjie Wang, Hangwei Qian, Yongjie Liu, Wei Guo, \& Chunyan Miao}

\begin{abstract}
Counterfactual explanations (CFEs) exemplify how to minimally modify a feature vector to achieve a different prediction for an instance. CFEs can enhance informational fairness and trustworthiness, and provide suggestions for users who receive adverse predictions. However, recent research has shown that multiple CFEs can be offered for the same instance or instances with slight differences. Multiple CFEs provide flexible choices and cover diverse desiderata for user selection. However, individual fairness and model reliability will be damaged if unstable CFEs with different costs are returned. Existing methods fail to exploit flexibility and address the concerns of non-robustness simultaneously. To address these issues, we propose a conceptually simple yet effective solution named \textit{Counterfactual Explanations with Minimal Satisfiable Perturbations (CEMSP)}. Specifically, \shortname\ constrains changing values of abnormal features with the help of their semantically meaningful normal ranges. For efficiency, we model the problem as a Boolean satisfiability problem to modify as few features as possible. Additionally, \shortname\  is a general framework and can easily accommodate more practical requirements, e.g., casualty and actionability. Compared to existing methods, we conduct comprehensive experiments on both synthetic and real-world datasets to demonstrate that our method provides more robust explanations while preserving flexibility.  
\end{abstract}

\begin{CCSXML}
<ccs2012>
   <concept>
       <concept_id>10010147.10010178.10010216</concept_id>
       <concept_desc>Computing methodologies~Philosophical/theoretical foundations of artificial intelligence</concept_desc>
       <concept_significance>500</concept_significance>
       </concept>
   <concept>
       <concept_id>10010147.10010148</concept_id>
       <concept_desc>Computing methodologies~Symbolic and algebraic manipulation</concept_desc>
       <concept_significance>500</concept_significance>
       </concept>
   <concept>
       <concept_id>10003752.10003790.10002990</concept_id>
       <concept_desc>Theory of computation~Logic and verification</concept_desc>
       <concept_significance>500</concept_significance>
       </concept>
   <concept>
       <concept_id>10010147.10010178.10010187</concept_id>
       <concept_desc>Computing methodologies~Knowledge representation and reasoning</concept_desc>
       <concept_significance>500</concept_significance>
       </concept>
 </ccs2012>
\end{CCSXML}

\ccsdesc[500]{Computing methodologies~Philosophical/theoretical foundations of artificial intelligence}
\ccsdesc[500]{Computing methodologies~Symbolic and algebraic manipulation}
\ccsdesc[500]{Theory of computation~Logic and verification}
\ccsdesc[500]{Computing methodologies~Knowledge representation and reasoning}

\keywords{Counterfactual explanations, multiplicity, normal ranges, flexibility, robustness}

\maketitle

\section{introduction}
\label{sec:introduction}
Understanding the internal mechanisms behind model predictions is difficult due to the large volume of parameters in machine learning models. This problem is particularly significant in high-stakes domains such as healthcare and finance, where incorrect predictions are disastrous \cite{daws-2021}. Counterfactual explanation (CFE) \cite{s2017counterfactual} aims to identify minimal changes required to modify the input to achieve a desired prediction and provides insights into why a model produces a certain prediction instead of the desired one. CFEs can help understand the underlying logic of certain predictions \cite{10.1145/3531146.3533218}, detect the inherent model bias for fairness \cite{DBLP:conf/fat/KasirzadehS21}, and provide suggestions to users who receive adverse predictions \cite{DBLP:conf/fat/KarimiSV21,ustun2019actionable}. Therefore, CFEs can be adopted in broad applications of healthcare, finance, education, justice, and other domains. 

Despite the valuable insights provided by CFEs, recent studies \cite{s2017counterfactual,10.1145/3531146.3533218,sokol2019counterfactual,10.1016/j.artint.2022.103840,slack2021counterfactual,laugel2019issues} have shown that multiple CFEs can exist with equivalent evaluation metrics (e.g., validity, proximity, sparsity, plausibility), yet significantly differ on feature values for an input or seemingly indifferent inputs. For example, ~\citeauthor{s2017counterfactual} \cite{s2017counterfactual} pointed out ``multiple counterfactuals are possible, ...'' and ``multiple outcomes based on changes to multiple variables may be possible''. ~\citeauthor{laugel2019issues} \cite{laugel2019issues} demonstrated that instances that are close to each other can have different CFEs. Moreover, ~\citeauthor{10.1016/j.artint.2022.103840} \cite{10.1016/j.artint.2022.103840} have argued that CFEs lack robustness to adverse perturbations if not deliberately designed. While multiple CFEs can lead to the same desired prediction, each CFE tells a different story to reach the target. 

It is important to note that counterfactual multiplicity has both advantages and disadvantages. On one hand, multiple CFEs can be beneficial because they afford more flexibility and freedom to select user-friendly CFEs when a single CFE may be  overly restricted for users. Specifically, diverse CFEs \cite{russell2019efficient,mothilal2020explaining} are offered to potentially cover broad user preferences; interactive human-computer interfaces \cite{9229232,10.1145/3459637.3482397} are designed on multiple CFEs to obtain more satisfiable ones. On the other hand, users who have the same feature values or seemingly inconsequential differences may receive inconsistent CFEs (e.g., two different diverse sets) as the CFE method itself does not store historical CFEs and guarantee the optimal solutions either. Such inconsistency inevitably raises fairness issues \cite{10.1016/j.artint.2022.103840,artelt2021evaluating} and undermines users' trust \cite{10.1145/3531146.3533218} in CFEs. For example, two financially similar individuals are rejected when they apply for a loan. Yet, CFEs for two people are quite different-one needs to update the salary slightly while the other is required to get a higher education degree and a better job. Another negative example is when users make some efforts towards previous CFEs but receive a significantly different CFE, rendering their previous efforts futile. Therefore, it is crucial to take advantage of the flexibility of multiple CFEs and maintain consistency for users having the same feature values or slight differences. 

Recent research \cite{upadhyay2021towards,black2022consistent,pawelczyk2020counterfactual,von2022fairness,dutta2022robust} on robustness mainly studies CFEs with consistent predictions under slight model updates (by restricting CFEs to preserve causal constraints, or follow data distribution, etc.), rather than generates CFEs with consistent feature values. Therefore, these studies fail to address fairness concerns and ignore the freedom of user selection.  Generally, models to explain are highly complex and non-convex, e.g., DNN models. Even constrained by consistent prediction, heuristic search strategies \cite{s2017counterfactual,laugel2017inverse,10.1145/3375627.3375812,tolomei2017interpretable} can still converge to different non-optimal solutions due to the huge search space. Meanwhile, these works do not explicitly exploit flexibility to meet user preferences. As the number of possible CFEs can be huge, existing methods on flexibility or robustness can be explained to be different selection strategies from a solution pool (without optimizing diversity and robustness in advance), i.e., selecting CFEs that are diverse \cite{mothilal2020explaining}, follow data distribution \cite{pawelczyk2020counterfactual}, or 
withhold causal constraints \cite{von2022fairness}. Motivated by this, we target to design a novel method that obtains a diverse and robust set of CFEs simultaneously.

To overcome the above limitations, we propose to incorporate task priors (normal ranges, a.k.a. reference intervals) to stabilize valid search regions, while ensuring that counterfactual explanations (CFEs) are diverse to meet various user requirements. It should be noted that robustness measures the differences between two sets of CFEs in different trials while diversity measures the inherent discrepancy within a set of CFEs. Normal ranges in our approach commonly exist in broad domains and are easy to obtain from prior knowledge. For example, the normal range of heart rate per minute is between 60 and 100; the IELTS score should be greater than 6.5 and the minimal GPA is 3.5 for Ph.D. admissions. We assume that the undesired prediction results from certain features outside of normal ranges and thus, we attempt to move abnormal features into normal ranges to generate CFEs with the desired prediction. Specifically, we replace an abnormal feature with the closest endpoint of its normal range. As the endpoints are stationary, CFEs after feature replacement tend to have the same feature value in different trials for the same/similar input. In practice, it may be unnecessary to move all abnormal features into normal ranges for the desired prediction. Therefore, we aim to select minimal subsets of abnormal features to replace where each subset corresponds to a CFE. CFEs determined by all minimal subsets are diverse as an arbitrary minimal subset is not contained by another subset. 

As mentioned earlier, the problem of finding CFEs boils down to selecting minimal subsets of abnormal features to replace, which can be formulated as either the Maximally Satisfiable Subsets (MSS) or Minimal Unsatisfiable Subsets (MUS) problem \cite{liffiton2013enumerating,liffiton2005finding}. However, finding all minimal sets for satisfiable CFEs is an NP-Complete problem as an exponential number of subsets should be checked. To enhance efficiency, we covert the enumeration of minimal subsets to the Boolean satisfiability problem (SAT) that finds satisfiable Boolean assignments over a series of Boolean logic formulas, which can be solved with efficient modern solvers. As for commonly mentioned constraints (e.g., actionability, correlation, and causality), we can conveniently write them in Boolean logic formulas, which can be conjugated into current clauses in conjunctive normal form (CNF). Therefore, our framework is flexible to provide feasible counterfactual recommendations. 

The main contributions of this paper are summarized as follows.
\begin{itemize}
    \item We reformulate the counterfactual explanation problem to satisfy both flexibility and robustness by replacing a minimal subset of abnormal features with the closest endpoints of normal ranges. 
    \item We convert this problem by checking the satisfiability of Boolean logic formulas for a Boolean assignment, which can be solved by modern SAT solvers efficiently. In addition, common constraints can be easily incorporated into current Boolean logic formulas and solved together. 
    \item We conduct intensive experiments on both synthetic and real-world datasets to demonstrate that our approach produces more consistent and diverse CFEs than state-of-the-art methods.
\end{itemize}

\section{Related Work}

Counterfactual explanations \cite{s2017counterfactual} refer to perturbed instances with the minimum cost that result in a different prediction from a pre-trained model given an input instance. These explanations provide ways to comprehend the model's prediction logic and offer advice to users receiving adverse predictions. Most existing algorithms focus on modeling practical requirements and user preferences with proper constraints. Typical constraints include actionability \cite{ustun2019actionable}, which freezes immutable features such as race, gender, etc; plausibility \cite{joshi2019towards,van2019interpretable}, which requires CFEs to follow the data distribution; diversity \cite{mothilal2020explaining,russell2019efficient}, which generates a diverse set of explanations at a time; sparsity \cite{dhurandhar2018explanations}, which favors fewer features changed; causality \cite{joshi2019towards,karimi2020algorithmic}, which restricts CFEs to meet specific causal relations. However, recent studies \cite{s2017counterfactual,laugel2019issues,russell2019efficient,10.1145/3531146.3533218,sokol2019counterfactual,10.1016/j.artint.2022.103840} have revealed that there often exist multiple CFEs with equivalent performance but different feature values for an input or seemingly indifferent inputs. Next, we review research that takes advantage of counterfactual multiplicity and addresses concerns regarding non-robustness. 

Multiple CFEs provide users with more flexibility to prioritize their preferences without compromising the validity and proximity of CFEs. When a single CFE is inadequate to meet users' requirements, employing a diverse set of CFEs is an effective and straightforward strategy to overcome this limitation. For example, \citeauthor{s2017counterfactual} \cite{s2017counterfactual} generate a diverse set by running multiple times with different initializations; \citeauthor{russell2019efficient} \cite{russell2019efficient} prohibits the transition to previous CFEs in each run; while \citeauthor{mothilal2020explaining} \cite{mothilal2020explaining} add a DPP (Determinantal Point Processes) term to ensure that the CFEs are far apart from each other. In addition, multiple CFEs also enable researchers to develop Human-Computer Interaction (HCI) tools for interactively satisfying user requirements \cite{9229232,10.1145/3459637.3482397}. However, such a diverse set can be inconsistent for two inputs with no or slight differences. In our paper, we aim to generate a consistent and diverse set of CFEs for an input or seemingly different inputs, to enhance the robustness and reliability of CFEs.  

The non-robustness issue of CFEs has garnered significant attention recently. As introduced in \cite{slack2021counterfactual,laugel2019issues}, even a slight perturbation to the input can result in drastically different CFEs. To verify this phenomenon for neural network models, \citeauthor{slack2021counterfactual}  \cite{slack2021counterfactual} train an adversarial model that is sensitive to trivial input changes. Some relevant works \cite{upadhyay2021towards,black2022consistent,pawelczyk2020counterfactual,dutta2022robust} propose to generate CFEs that yield consistent predictions when the model is retrained. For example, \cite{pawelczyk2020counterfactual} proves that adhering to the data manifold ensures stable predictions for CFEs; \cite{upadhyay2021towards} incorporates adversarial training to produce robust models for generating explanations; ~\citeauthor{black2022consistent}~\shortcite{black2022consistent} states that closeness to the data manifold is insufficient to indicate counterfactual stability, and they propose Stable Neighbor Search (SNS) to find an explanation with the lower model Lipschitz continuity and higher confidence. However, constraining consistent predictions of CFEs does not necessarily ensure CFEs with the same or similar feature values, and still fails to address the unfairness issue. Moreover, these robustness methods lack the flexibility to meet user requirements while our work considers both flexibility and robustness simultaneously. 

\section{Preliminary}

\subsection{Counterfactual Explanations}
Let us consider a pretrained model $f: \mathcal{X} \to \mathcal{Y}$, where $\mathcal{X} \subseteq \mathbb{R}^d $ denotes the feature space and $\mathcal{Y}$ is the prediction space. For simplicity, let $\mathcal{Y} = \{0, 1\}$, where $0$/$1$ denotes unfavorable/favorable prediction, respectively. Given an input instance $\mathbf{x} \in \mathcal{X}$, which is predicted to be the unfavorable outcome ($f(\mathbf{x})=0$), a counterfactual explanation (CFE) $\mathbf{c}$ is a data point that leads to a favorable prediction, i.e., $f(\mathbf{c})=1$, with minimal perturbations of $\mathbf{x}$. Formally, a counterfactual explanation method $g: f \times \mathcal{X} \to \mathcal{X}$ can be mathematically defined as follows:
\begin{align}
\label{eq:definition}
    \mathop{\arg\min}_{\mathbf{c}}\ &\ \text{cost}(\mathbf{x}, \mathbf{c}) \\
    s.t.\  &\ f(\mathbf{c}) = 1 \nonumber
\end{align}
where $\text{cost}(\cdot, \cdot): \mathcal{X}\times \mathcal{X} \to \mathbb{R}^{+}$ is a distance or cost metric that quantifies the efforts required in order to change from an input $\mathbf{x}$ to its CFE $\mathbf{c}$. In practice, the commonly used cost function includes $L_1$/MAD \cite{s2017counterfactual,russell2019efficient}, total log-percentile shift \cite{ustun2019actionable}, and $L_2$ norm on latent space \cite{pawelczyk2020learning}. To optimize Eqn. \eqref{eq:definition}, it can be further transformed to the Lagrangian form \cite{s2017counterfactual}, as shown below:
\begin{align}
\label{eq:lagrangian}
    \mathcal{L}(\mathbf{c}, \lambda) = \text{cost}(\mathbf{x}, \mathbf{c}) + \lambda \ell(f(\mathbf{c}), 1)
\end{align}
where $\ell(\cdot, \cdot)$ is a differential function to measure the gap between $f(\mathbf{c})$ and the favorable prediction $1$, and $\lambda$ is a positive trade-off factor. By optimizing the above objective, a CFE method $g(f, \mathbf{x})$ returns a single CFE or a set of CFEs for an input $\mathbf{x}$. 

The definition in Eqn. \eqref{eq:definition} captures the most basic form of counterfactual explanations. Usually, additional constraints are often required to ensure that the produced CFEs are useful and actionable for specific applications \cite{verma2020counterfactual}. 


\subsection{Robustness of Counterfactual Explanations}

Motivated by the formalization in \cite{artelt2021evaluating}, we formally define the robustness of CFEs in more general cases that include slight input perturbations and model changes. However,  before presenting technical details, one critical question that needs to answer is ``Do we want CFEs to remain consistent after a series of slight changes, or should they vary to reflect such changes?''.  The answer depends on practical scenarios. In certain applications, one may expect CFEs to be sensitive to such tiny changes. For example, in the study of the effects of climate change on sea turtles \cite{blechschmidt2020climate}, one may expect CFEs to be sensitive to temperature changes. In this paper, we assume that such trivial changes are either irrelevant or less important to the generation of CFEs. As such, we aim to produce CFEs that are robust to trivial changes, such as inputs added with random noise, and model retraining on new data from the same distribution. 

Let $\hat{\mathbf{x}}$ represent a slightly perturbed sample that is close to $\mathbf{x}$, meaning $\hat{\mathbf{x}} \sim p(\mathbf{x})$, where $p(\mathbf{x})$ is the density estimation of perturbed samples that yield the same prediction as the input $\mathbf{x}$. Similarly, let $\hat{f} \in \mathcal{F}$ denote a retrained model belonging to the class $\mathcal{F}$, which consists of potential models that perform equivalently well as the original one.

\begin{definition}[Robustness of Counterfactual Explanations]
Given a function $d(\cdot, \cdot)$ computing the similarity between two sets of CFEs, we quantify the robustness of the explanations $g(f,\mathbf{x})$ by assessing the expected similarity between the current set of CFEs, and a new set of CFEs after potential input perturbations or model changes.
\begin{align}
    \mathop{\mathbb{E}}_{\mathbf{x}' \sim p(\mathbf{x}), \atop
\hat{f}\in \mathcal{F}}
    [d(g(f, \mathbf{x}), g(\hat{f}, \mathbf{x}'))]
\end{align}
\end{definition}

A lower value indicates higher robustness. By minimizing the above expectation, we can generate robust CFEs. However, in real life, $p(\mathbf{x})$ and $\mathcal{F}$ are typically unknown. Intuitively, they can be determined based on specific changes that users desire to be robust against. For instance, one can consider adding Gaussian noise to the input, masking certain features, or retraining the models on data from the same distribution, to decide $p(\mathbf{x})$ and $\mathcal{F}$.

\begin{figure}[tp]
    \centering
    \includegraphics[width=\linewidth]{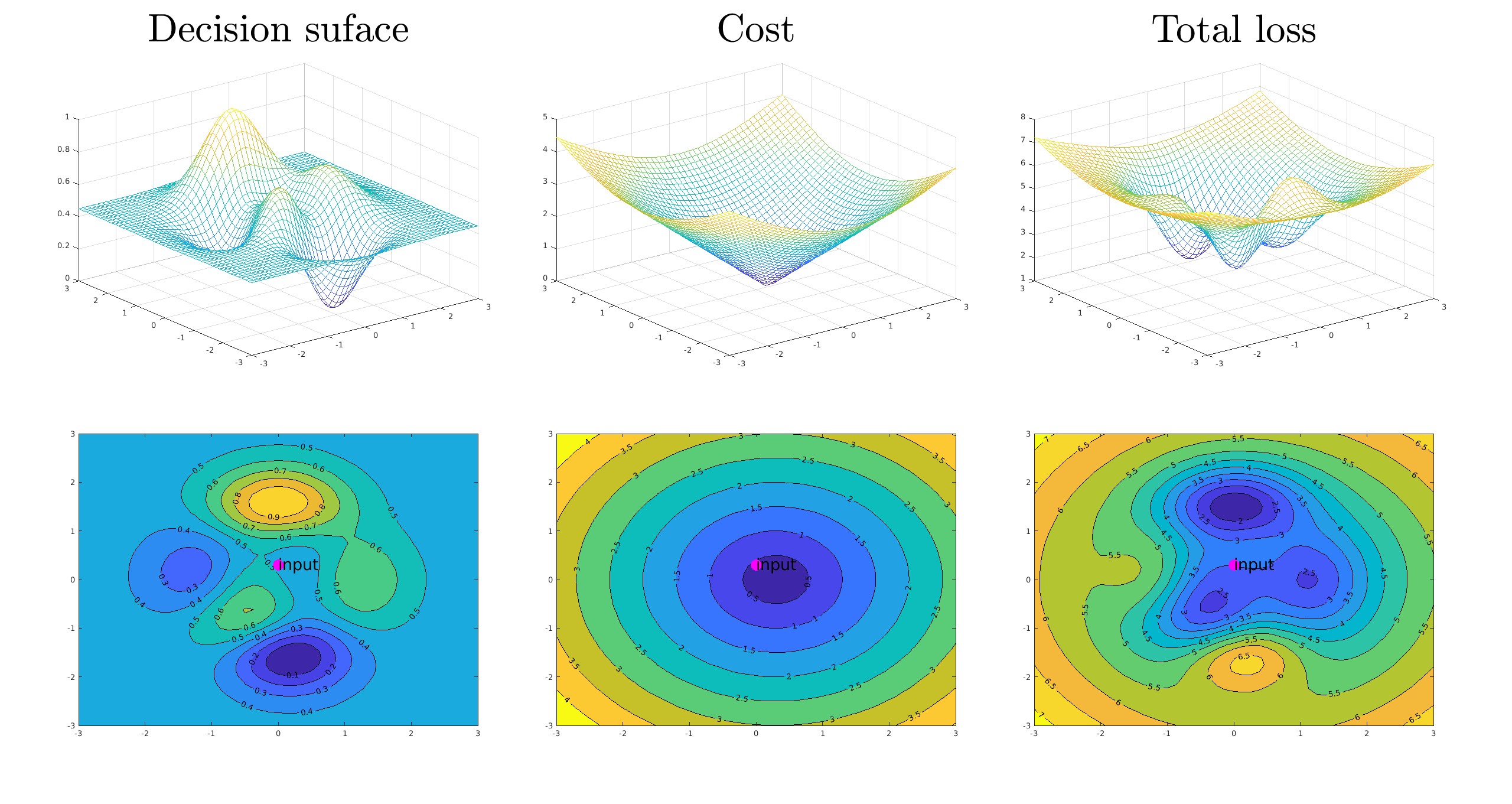}
    \caption{The toy example to illustrate the non-robustness issue caused by input perturbations for an input $(0.3,0)$. The figures in the top row depict the decision surface, $L_2$ cost, and total loss in the Lagrangian form ($\lambda=5$ for simplicity). The bottom row shows the contour lines corresponding to each figure above. It is evident that even a slight perturbation on this input can result in different local minima.}
    \label{fig:toy_example}
\end{figure}

\subsection{Causes of Non-robustness}

Here, we explain the root causes of non-robustness. The total loss in Eqn. \eqref{eq:lagrangian} form is usually non-convex due to the non-convex decision surface of probabilistic models and other constraints. As shown in Figure \ref{fig:toy_example}, multiple local minima can be found, but current methods often select a single or $k$ CFEs from them. Such selected CFEs can be different in each trial. Next, we discuss several influential factors that result in non-robust CFEs. 

\begin{itemize}
    \item Input perturbations. Input instances can be perturbed by adding some noise or masking random features. Due to the local sensitivity of large models, such trivial perturbations can significantly influence model predictions, leading to different counterfactual explanations (CFEs) \cite{slack2021counterfactual}. 

    \item Model updates. The predictive model $f$ in Eqn.~\eqref{eq:definition} is typically retrained periodically in deployment. The updated model $f'$ may exhibit slightly different behavior compared to the previous model and thus may have a great impact on the cost of the desired prediction. 
    
    \item Random factors. Heuristic search methods $g(f, \mathbf{x})$ for Eq.~\eqref{eq:lagrangian} often involve random factors, e.g., random initial points in gradient descent \cite{s2017counterfactual}, random samples in Growing Sphere \cite{lash2017generalized,pawelczyk2020learning}, and random selection in genetic algorithms \cite{10.1145/3375627.3375812}. Algorithm randomness often causes different solutions in non-convex situations. 
    \item Algorithmic configurations. Different configurations of $g(f, \mathbf{x})$ also affect the search process, e.g., the trade-off factor $\lambda$, the step size in gradient descent, the stop condition, the guide point \cite{van2019interpretable}, the radius of the sphere \cite{lash2017generalized,pawelczyk2020learning}, variational autoencoders used to approximate data distribution \cite{pawelczyk2020learning,joshi2019towards}. 
\end{itemize}

\section{Proposed Method}

In this section, we propose a method offering a diverse set of CFEs that remain stable when the input or model being explained is changed subtly by incorporating domain knowledge into the search.

\subsection{Problem Formulation}
\label{sec:41}

Let $\mathcal{N} = \{1, 2, \ldots, d\}$ be the indices of $d$-dimensional input features. $\mathbf{x}^{i}, i \in \mathcal{N}$ denotes the $i$-th feature value of an input instance $\mathbf{x}$, and $[\mathbf{a}^i, \mathbf{b}^i]$ denotes the normal range of $i$-th feature. We assume the normal range of each feature can be acquired from domain knowledge. For example, the normal range of BMI \cite{CDC_USA}  is $[18.5, 24.9]$, and the normal heart rate \cite{shmerling-2020} is between 60 to 100. For a given input $\mathbf{x}$, then we partition $\mathcal{N}$ into two disjoint sets by checking whether each feature is in the normal range or not:  $\mathcal{N}_0$ (with size $d_0$) for abnormal features and $\mathcal{N}_1$ (with size $d_1$) for features within the normal range.

We further assume that undesired predictions are attributed to abnormal features. Our intuition is to bring an abnormal feature within its normal range, which would increase the probability of the desired prediction. To ensure consistency, we replace an abnormal feature with the closest endpoint of the corresponding normal range. For instance, if the BMI of an obese person is $40$, and the suggested BMI is $24.9$ (BMI normal range $[18.5, 24.9]$). We aim to convert a subset of abnormal features into their respective normal ranges, to achieve lower cost and higher sparsity. Let $\mathcal{A} \subseteq \mathcal{N}_0$ represent the subset of abnormal features to be replaced. We employ the function $\eta(\cdot,\cdot) \in \mathbb{R}^d$ to calculate the values after replacement for the abnormal features, where the $i$-th element is computed as follows:

\begin{align}
    \eta(\mathbf{x}, \mathcal{A})^i = \begin{cases}
\mathbf{a}^i, & i \in \mathcal{A} \text{ and } \mathbf{x}^i < \mathbf{a}^i\\
\mathbf{b}^i, & i \in \mathcal{A} \text{ and } \mathbf{x}^i > \mathbf{b}^i\\
\mathbf{x}^i, & \text{otherwise}
\end{cases}
\end{align}

In applications where normal ranges are not available (we set $\mathcal{N}_0 = \mathcal{N}$), we replace the input feature values with the corresponding values of a guide point or prototype that has the desired prediction. Correspondingly, given a guide point or prototype $\hat{\mathbf{x}}$,  $\eta(\cdot,\cdot)^i$ can be expressed as follows:
\begin{align}
    \eta(\mathbf{x}, \mathcal{A})^i = \begin{cases}
\hat{\mathbf{x}}^i, & i \in \mathcal{A}\\
\mathbf{x}^i, & \text{otherwise}
\end{cases}
\end{align}

We use the binary vector $\mathbf{m}(\mathcal{A}) \in \{0,1\}^d$ (abbreviated as $\mathbf{m}$) to represent the presence or absence of each feature in subset $\mathcal{A}$. Specifically, each element $\mathbf{m}^i$ indicates whether the $i$-th feature is included in $\mathcal{A}$, i.e., $\mathbf{m}^i = \mathds{1}_{i \in \mathcal{A}}$.
The feature replacement operation for an input $\mathbf{x}$ on subset $\mathcal{A}$ is defined as follows:
\begin{align}
r(\mathbf{x}, \mathcal{A}) = \mathbf{x} \odot (1- \mathbf{m}) + \eta(\mathbf{x}, \mathcal{A}) \odot \mathbf{m}
\end{align}
where $\odot$ is the Hadamard (element-wise) product of vectors. If the $i$-th feature is not included in $\mathcal{A}$, we keep its original value, otherwise, we change it to the closest endpoint of its normal range. 

As there could be a large number of subsets $\mathcal{A}$ that can achieve the desired prediction after feature replacement, we aim to find the minimal subsets $\mathcal{A}^*$. Note that a minimal subset does not necessarily indicate minimal cardinality due to the difficulty of comparing the costs of different features. For example, increases in ``academic degree'', ``credit score'', and ``salary'' cannot be measured concisely in a mathematical cost. As such, we treat the subset (on ``academic degree'', ``credit score'') and the subset (on ``salary'') as two different solutions, although the latter subset has the smaller cardinality, same to \cite{10.1145/3459637.3482397}. Based on the above discussion, we formulate this problem to generate nondominated sets of CFEs as follows:

\begin{definition}[Minimal Satisfiable Counterfactual Explanations]\label{def:cf_normal}
Given a pre-trained model $f$, input $\mathbf{x}$ and feature normal ranges, the goal is to find all minimal satisfiable subsets, where for each subset $\mathcal{A}^*\subseteq\mathcal{N}_0$, $f(r(\mathbf{x}, \mathcal{A}^*))$ can achieve the desired prediction and $r(\mathbf{x}, \mathcal{A}^*)$ represents a counterfactual explanation $\mathbf{c}$.
\end{definition}

\textbf{Diversity Analysis:} In \cite{mothilal2020explaining}, authors propose a diversity metric over a set of CFEs of size $k$, named \textit{count-diversity}, to measure the inner discrepancy, written by,
\begin{equation}
\label{eq:cdiversity}
    \textit{count-diversity} =  \frac{2}{k(k-1)d} \sum_{i=1}^{k-1} \sum_{j=i+1}^k \sum_{l=1}^d \mathds{1}_{[\mathbf{c}_i^l \neq \mathbf{c}_j^l]}
\end{equation}
\begin{theorem}The lower bound of \textit{count-diversity} defined in Eqn. \eqref{eq:cdiversity} over solutions of our problem definition is $\frac{2}{d}$.
\end{theorem}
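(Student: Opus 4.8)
The plan is to translate the set-theoretic structure of the minimal subsets directly into a per-pair Hamming count and then average. First I would observe that each CFE $\mathbf{c}_i$ in the returned set has the form $r(\mathbf{x}, \mathcal{A}_i)$ for a distinct minimal satisfiable subset $\mathcal{A}_i \subseteq \mathcal{N}_0$ in the sense of Definition~\ref{def:cf_normal}. The crux is to rewrite the inner count $\sum_{l=1}^d \mathds{1}_{[\mathbf{c}_i^l \neq \mathbf{c}_j^l]}$ in terms of the symmetric difference $\mathcal{A}_i \triangle \mathcal{A}_j$, after which the bound follows from a single averaging step.

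The key observation (step two) is that coordinate $l$ is moved away from its input value exactly when $l \in \mathcal{A}_i$, and in that event the replaced value $\eta(\mathbf{x},\mathcal{A}_i)^l$ is the fixed endpoint $\mathbf{a}^l$ or $\mathbf{b}^l$, which strictly differs from $\mathbf{x}^l$ precisely because $l$ is abnormal (so either $\mathbf{x}^l < \mathbf{a}^l$ or $\mathbf{x}^l > \mathbf{b}^l$). Since the endpoint assigned to a coordinate is the same for every subset that contains it, a coordinate in $\mathcal{A}_i \cap \mathcal{A}_j$ receives identical values in both CFEs, whereas a coordinate in exactly one of the two subsets disagrees. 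Hence $\mathbf{c}_i^l \neq \mathbf{c}_j^l$ if and only if $l \in \mathcal{A}_i \triangle \mathcal{A}_j$, giving $\sum_{l=1}^d \mathds{1}_{[\mathbf{c}_i^l \neq \mathbf{c}_j^l]} = |\mathcal{A}_i \triangle \mathcal{A}_j|$.

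Step three invokes minimality. Because each $\mathcal{A}_i$ is minimal with respect to set inclusion, distinct returned subsets form an antichain, so neither contains the other. Consequently there is at least one index in $\mathcal{A}_i \setminus \mathcal{A}_j$ and at least one in $\mathcal{A}_j \setminus \mathcal{A}_i$, whence $|\mathcal{A}_i \triangle \mathcal{A}_j| \geq 2$. Substituting this into Eqn.~\eqref{eq:cdiversity} and using $\sum_{i<j} 1 = \binom{k}{2} = \tfrac{k(k-1)}{2}$ yields
\begin{align}
\textit{count-diversity} \;\geq\; \frac{2}{k(k-1)d}\cdot \frac{k(k-1)}{2}\cdot 2 \;=\; \frac{2}{d}.
\end{align}

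I would expect the main obstacle to be making the if-and-only-if in step two airtight: one must use the strict inequalities defining $\mathcal{N}_0$ to guarantee that abnormal coordinates genuinely move, and the shared-endpoint property to guarantee that coordinates in $\mathcal{A}_i \cap \mathcal{A}_j$ cancel, so that only the symmetric difference contributes to the count (this also shows distinct minimal subsets produce distinct CFEs). The remaining antichain argument and averaging are routine. I would also remark that the bound is tight: two minimal subsets differing by a single feature swap achieve $|\mathcal{A}_i \triangle \mathcal{A}_j| = 2$, so $\tfrac{2}{d}$ is attained.
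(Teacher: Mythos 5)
Your proposal is correct and rests on the same two facts as the paper's own proof: that feature disagreements between two returned CFEs correspond exactly to the symmetric difference of their minimal subsets, and that minimality forces any two such subsets to be incomparable, so each pair differs in at least two coordinates before the final averaging step. The paper packages this as a proof by contradiction over the cases $D(\mathbf{c}_i,\mathbf{c}_j)=0$ and $D(\mathbf{c}_i,\mathbf{c}_j)=1$, whereas you argue directly via the identity $D(\mathbf{c}_i,\mathbf{c}_j)=|\mathcal{A}_i \triangle \mathcal{A}_j|$; this is a cleaner presentation of the same argument, not a different one.
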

\begin{proof} 
Let's denote the inner loop term $\sum_{l=1}^d \mathds{1}_{[\mathbf{c}_i^l \neq \mathbf{c}_j^l]}$ as $D(\mathbf{c}_i, \mathbf{c}_j)$ for brevity. We aim to prove that for any two arbitrary CFEs $\mathbf{c}_i$ and $\mathbf{c}_j$, the pairwise distance $D(\mathbf{c}_i, \mathbf{c}_j)$ is always greater than or equal to $2$. To demonstrate this, we consider two contradictive cases by assuming $0 \le D(\mathbf{c}_i, \mathbf{c}_j) < 2$.

\begin{case}[$D(\mathbf{c}_i, \mathbf{c}_j)=0$]
This case implies $\mathbf{c}_i=\mathbf{c}_j$, which contradicts the fact that two distinct CFEs are from the solution set. 
\end{case}
\begin{case}[$D(\mathbf{c}_i, \mathbf{c}_j)=1$]
In this case, there is only one feature difference. Let $\mathcal{A}_i$ and $\mathcal{A}_j$ denote the indices of abnormal features of two solutions. Then, $\mathcal{A}_i \subseteq \mathcal{A}_j$ or $\mathcal{A}_j \subseteq \mathcal{A}_i$ must exist, which also contradicts the minimal set to return in our problem definition. One CFE should be excluded because it costs more than the other. 
\end{case}
The above contradictive proof shows that $D(\mathbf{c}_i, \mathbf{c}_j) \ge 2$ holds. Summing up all $\frac{k(k-1)}{2}$ pair-wise distance $D(\mathbf{c}_i, \mathbf{c}_j)$, we can obtain the lower bound $\frac{2}{d}$.
\end{proof}

\textbf{Robustness Analysis:} Let $\mathbf{z}=\mathbf{c}-\mathbf{x}$ represent the recommended actions for a user. In our method, $\mathbf{z}$ consistently applies to slightly perturbed instance $\hat{\mathbf{x}}$, except in the following two situations: (1) $f(\hat{\mathbf{x}}+\mathbf{z})$ is no longer valid, which occurs when slight perturbations have a negative impact on the desired prediction. For example, normal features may be turned into abnormal ones. We need more effort than $\mathbf{z}$ to achieve the desired prediction. (2) Changing fewer abnormal features is sufficient to achieve the desired prediction, indicating that slight perturbations are beneficial. In this case, $\mathbf{z}$ is omitted as there exist more cost-efficient solutions. As both model continuity and perturbation strategies can influence the $\mathbf{z}$, we leave the determination of the maximal bound of perturbation, to which our method remains robust, for future work. 

\subsection{Problem Solving}
The brute-force method that evaluates all possible subsets is exponentially complex with respect to the number of abnormal features. Next, we propose a technique \fullname to boost the search process. Our method starts with finding the binary vectors $\mathbf{m}$ that satisfy the desired prediction after feature replacement. This can be converted into the Boolean satisfiability problem that checks whether there exists a Boolean value assignment on $d_0$ variables (features in abnormal ranges) such that the conjunction of Boolean formulas evaluates to $True$. For better efficiency, we introduce the following proposition from domain knowledge.
\begin{figure}
    \centering
    \includegraphics[width=\linewidth]{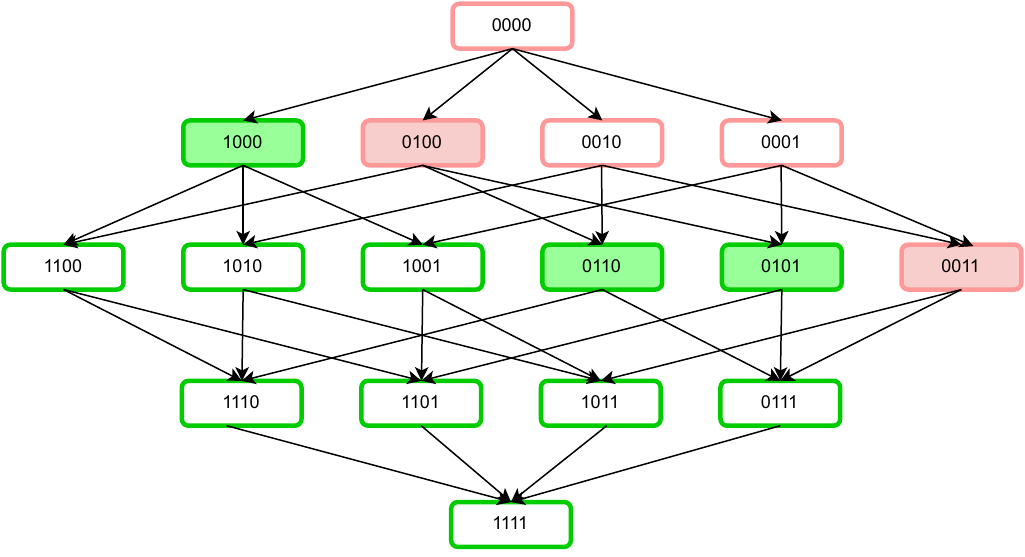}
    \caption{The figure shows all subsets of a toy example with $4$ abnormal features. The bitvectors denote the binary vector $\mathbf{m}$. Boxes with red/green borders represent the unsatisfiable/satisfiable subsets respectively. The minimal subsets for CFEs are filled with green background and the maximal unsatisfiable subsets are filled with red background.}
    \label{fig:example}
\end{figure}
\begin{prop}[Monotoncity of $f(r(\mathbf{x}, \cdot))$]
The function $f(r(\mathbf{x}, \cdot))$ is monotone, that is, $f(r(\mathbf{x}, \mathcal{A})) \le f(r(\mathbf{x}, \mathcal{B}))$ holds for all $\mathcal{A} \subseteq \mathcal{B} \subseteq \mathcal{N}_0$. 
\end{prop}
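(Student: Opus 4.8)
The plan is to reduce the claimed set-level monotonicity to a single-feature monotonicity that is supplied by the domain-knowledge assumption already stated above, namely that undesired predictions are attributed to abnormal features and that pushing an abnormal feature toward its normal range raises the confidence of the desired prediction. The first thing I would establish is a compositionality property of the replacement operator: for any $\mathcal{A} \subseteq \mathcal{B} \subseteq \mathcal{N}_0$,
\begin{align}
r(\mathbf{x}, \mathcal{B}) = r(r(\mathbf{x}, \mathcal{A}), \mathcal{B}\setminus\mathcal{A}).
\end{align}
This holds because $\eta(\mathbf{x},\cdot)$ replaces each selected abnormal coordinate $i$ by a \emph{fixed} endpoint ($\mathbf{a}^i$ or $\mathbf{b}^i$) determined only by the sign of $\mathbf{x}^i-\mathbf{a}^i$ or $\mathbf{x}^i-\mathbf{b}^i$; hence replacements on distinct coordinates neither interact nor depend on order, and re-selecting an already-replaced coordinate is idempotent. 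I would verify this coordinatewise directly from the definitions of $\mathbf{m}$, $\eta$, and $r$, noting that for any unreplaced index $i\in\mathcal{N}_0$ the value is still the original (abnormal) $\mathbf{x}^i$, so the comparison direction against its endpoints is preserved.

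Second, I would make explicit the feature-level hypothesis that the proposition actually encodes: moving any single abnormal coordinate from its out-of-range value to the nearest endpoint of its normal range does not decrease the desired-class response of $f$. As the paper emphasizes, this is not a theorem about an arbitrary black-box $f$ but a modeling assumption drawn from domain knowledge. I would therefore state it precisely as a single-step inequality, $f(\mathbf{y}) \le f(r(\mathbf{y}, \{i\}))$ for every intermediate point $\mathbf{y}$ arising in the process and every $i \in \mathcal{N}_0$, and flag clearly that the whole proposition rests on this hypothesis.

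Third, I would combine the two ingredients by induction on $|\mathcal{B}\setminus\mathcal{A}|$. Writing $\mathcal{B}\setminus\mathcal{A} = \{i_1, \ldots, i_m\}$ and using compositionality to peel off one index at a time, each step replaces exactly one further abnormal coordinate by its endpoint and, by the single-step hypothesis, cannot lower the response; telescoping the resulting $m$ inequalities yields $f(r(\mathbf{x},\mathcal{A})) \le f(r(\mathbf{x},\mathcal{B}))$. Under the binary reading $f \in \{0,1\}$ this is exactly the upward closure of the satisfiable region — if $\mathcal{A}$ already attains the favorable prediction, so does every superset $\mathcal{B}\supseteq\mathcal{A}$ — which is precisely the structural fact exploited by the SAT / MSS--MUS enumeration and illustrated in Figure~\ref{fig:example}.

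The hard part will be conceptual rather than computational: because $f$ is arbitrary and non-convex, set-monotonicity genuinely fails without the domain assumption, so the crux is to isolate the \emph{minimal}, cleanly stated single-feature hypothesis that makes the claim true and to argue honestly that it is a prior rather than a derived fact. Once that hypothesis is granted, the remaining work — the coordinatewise compositionality of $r$ and the telescoping induction — is routine, and I would keep those steps brief.
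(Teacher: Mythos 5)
Your proposal is sound, but it is worth knowing that the paper offers no proof of this proposition at all: it is stated as a modeling assumption and "justified" only by the informal remarks that follow it (``this proposition aligns with common sense,'' ``moving an abnormal feature into the normal range should never decrease the desired probability,'' and ``we assume that the predictive model $f$ has learned the relationship between feature normal ranges and the predicted classes''). You correctly diagnose that for an arbitrary non-convex $f$ the set-level monotonicity is simply false without such an assumption, and your decomposition is a genuine refinement of what the paper does: you isolate a \emph{minimal} single-feature hypothesis, $f(\mathbf{y}) \le f(r(\mathbf{y},\{i\}))$ at every intermediate point, prove the compositionality identity $r(\mathbf{x},\mathcal{B}) = r(r(\mathbf{x},\mathcal{A}),\mathcal{B}\setminus\mathcal{A})$ (which does hold coordinatewise, exactly as you argue, because each selected coordinate is snapped to a fixed endpoint determined by the original $\mathbf{x}^i$ and unselected coordinates of $\mathcal{N}_0$ retain their original abnormal values), and recover the proposition by telescoping. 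What your route buys is a precise statement of what is assumed versus what is derived — in particular it makes visible that the hypothesis must hold at partially-replaced points, not merely at $\mathbf{x}$, which is a strictly stronger requirement than the paper's one-sentence intuition acknowledges. What the paper's route buys is brevity and an honest framing of the claim as a domain prior rather than a theorem; since both ultimately rest on an unverifiable assumption about $f$, neither is more ``proved'' than the other, but yours is the cleaner logical account and would be the right way to present it if the proposition were to carry the weight of Theorems~\ref{theorem:superset} and~\ref{theorem:subset} formally.
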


This proposition aligns with common sense in practical applications. It is important to note that the undesired prediction arises from specific abnormal features according to our assumption. Intuitively, moving an abnormal feature into the normal range should never decrease the desired probability. Additionally, we assume that the predictive model $f$ has learned the relationship between feature normal ranges and the predicted classes.

Based on the monotonicity of function $f(r(\mathbf{x}, \cdot))$, we derive the following two theorems for any $\mathcal{A} \subseteq \mathcal{B} \subseteq \mathcal{N}_0$.

\begin{theorem}\label{theorem:superset}
If $f(r(\mathbf{x}, \mathcal{A}))$ can achieve the desired target, $f(r(\mathbf{x}, \mathcal{B}))$ can also achieve the desired target for any superset $\mathcal{B}$ of $\mathcal{A}$. 
\end{theorem}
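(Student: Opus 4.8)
The plan is to obtain this theorem as an essentially immediate consequence of the Monotonicity Proposition for $f(r(\mathbf{x}, \cdot))$ stated just above. First I would fix the interpretation of ``achieving the desired target'': since the prediction space is $\mathcal{Y} = \{0,1\}$ with $1$ denoting the favorable outcome, the claim that $f(r(\mathbf{x}, \mathcal{A}))$ achieves the desired target is precisely the assertion $f(r(\mathbf{x}, \mathcal{A})) = 1$. This identification is the only genuinely conceptual step; everything after it is a short chain of inequalities.

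Next I would invoke monotonicity directly. Because $\mathcal{A} \subseteq \mathcal{B} \subseteq \mathcal{N}_0$, the Monotonicity Proposition gives
\[
    f(r(\mathbf{x}, \mathcal{A})) \le f(r(\mathbf{x}, \mathcal{B})).
\]
Substituting the hypothesis $f(r(\mathbf{x}, \mathcal{A})) = 1$ then yields $1 \le f(r(\mathbf{x}, \mathcal{B}))$.

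Finally I would close the argument using the discreteness of $\mathcal{Y}$: since $f$ maps into $\{0,1\}$, its largest attainable value is $1$, so the sandwich $1 \le f(r(\mathbf{x}, \mathcal{B})) \le 1$ forces $f(r(\mathbf{x}, \mathcal{B})) = 1$. Hence the superset $\mathcal{B}$ attains the desired target as well, which is exactly the claim. No additional case analysis or appeal to the structure of $r$ beyond monotonicity is required.

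The main obstacle here is interpretive rather than technical: one must be careful that the monotonicity inequality is read over the ordered (here binary) output of $f$, and that ``desired target'' corresponds to the maximal label $1$. Once that is pinned down, the discrete codomain collapses the sandwiching inequality to an equality and the proof terminates. I would also remark that the same argument survives verbatim if $f$ is instead regarded as a continuous score compared against a decision threshold: one simply replaces ``$=1$'' by ``exceeds the threshold'' and uses monotonicity to preserve that inequality under enlarging the replaced set $\mathcal{A}$ to $\mathcal{B}$.
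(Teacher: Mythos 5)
Your proposal is correct and follows essentially the same route as the paper: a direct one-line application of the Monotonicity Proposition to transfer the ``desired target'' condition from $\mathcal{A}$ to its superset $\mathcal{B}$. The only cosmetic difference is that the paper phrases ``achieving the desired target'' as exceeding a confidence threshold $\delta$ (i.e., $f(r(\mathbf{x},\mathcal{A}))\ge\delta$ implies $f(r(\mathbf{x},\mathcal{B}))\ge f(r(\mathbf{x},\mathcal{A}))\ge\delta$), which is exactly the variant you note in your closing remark, so no substantive gap exists.
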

\begin{theorem}\label{theorem:subset}
If $f(r(\mathbf{x}, \mathcal{B}))$ cannot achieve the desired target, $f(r(\mathbf{x}, \mathcal{A}))$ cannot satisfy the desired target either for any subset $\mathcal{A}$ of $\mathcal{B}$. 
\end{theorem}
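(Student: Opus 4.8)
The plan is to obtain Theorem~\ref{theorem:subset} as an immediate consequence of the Monotonicity Proposition, observing that it is precisely the contrapositive of Theorem~\ref{theorem:superset}. First I would fix notation: since $\mathcal{Y}=\{0,1\}$ with $1$ the favorable outcome, ``achieving the desired target'' means $f(r(\mathbf{x},\cdot))=1$ and ``failing to achieve it'' means $f(r(\mathbf{x},\cdot))=0$. The ordering $0<1$ on $\mathcal{Y}$ is exactly what the Monotonicity Proposition refers to when it asserts $f(r(\mathbf{x},\mathcal{A}))\le f(r(\mathbf{x},\mathcal{B}))$ for all $\mathcal{A}\subseteq\mathcal{B}\subseteq\mathcal{N}_0$.

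Second, I would carry out the direct argument. Take any $\mathcal{A}\subseteq\mathcal{B}\subseteq\mathcal{N}_0$ and suppose $f(r(\mathbf{x},\mathcal{B}))$ does not achieve the target, i.e.\ $f(r(\mathbf{x},\mathcal{B}))=0$. Applying monotonicity to the inclusion $\mathcal{A}\subseteq\mathcal{B}$ gives $f(r(\mathbf{x},\mathcal{A}))\le f(r(\mathbf{x},\mathcal{B}))=0$. Because $f$ takes values in $\{0,1\}$ and cannot be negative, this forces $f(r(\mathbf{x},\mathcal{A}))=0$, so $\mathcal{A}$ also fails to reach the desired target. Since $\mathcal{A}$ was an arbitrary subset of $\mathcal{B}$, the claim holds for every subset of $\mathcal{B}$.

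Third, as a consistency check I would note the logical equivalence with the preceding result: Theorem~\ref{theorem:superset} reads ``target at $\mathcal{A}$ $\Rightarrow$ target at $\mathcal{B}$ whenever $\mathcal{B}\supseteq\mathcal{A}$'', whose contrapositive is exactly ``no target at $\mathcal{B}$ $\Rightarrow$ no target at $\mathcal{A}$ whenever $\mathcal{A}\subseteq\mathcal{B}$'', the statement to be proved. Either the monotonicity computation above or this contrapositive reformulation suffices, and the two can be cross-checked against one another.

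I do not expect a genuine obstacle here, as the result is essentially a one-line corollary of monotonicity; the only point requiring care is the interpretation of $f$. If one prefers to treat $f(r(\mathbf{x},\cdot))$ as a soft score thresholded at some $\tau$ rather than the hard $\{0,1\}$ label, the same reasoning still applies: $f(r(\mathbf{x},\mathcal{B}))<\tau$ together with $f(r(\mathbf{x},\mathcal{A}))\le f(r(\mathbf{x},\mathcal{B}))$ yields $f(r(\mathbf{x},\mathcal{A}))<\tau$. I would present the hard-label version to match the $\mathcal{Y}=\{0,1\}$ convention of the Preliminary, while emphasizing that monotonicity is the only property invoked, so the argument is robust to this modeling choice.
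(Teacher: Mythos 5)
Your proposal is correct and follows essentially the same route as the paper: the paper also derives Theorem~\ref{theorem:subset} directly from the Monotonicity Proposition (stating it as the ``similar'' counterpart of Theorem~\ref{theorem:superset}), working with a confidence threshold $\delta$ exactly as in your final remark. Your additional observation that the statement is the contrapositive of Theorem~\ref{theorem:superset} is a valid cross-check but does not change the substance of the argument.
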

\begin{proof}
We first prove theorem \ref{theorem:superset}. If $f(r(\mathbf{x}, \mathcal{A})) \ge \delta$, we can induce that $f(r(\mathbf{x}, \mathcal{B})) \ge \delta$ as $f(r(\mathbf{x}, \mathcal{B})) \ge f(r(\mathbf{x}, \mathcal{A}))$ holds for $\mathcal{A} \subseteq \mathcal{B}$, where $\delta$ is the confidence threshold of desired prediction. Similarly, theorem \ref{theorem:subset} can be proved.    
\end{proof}
Theorem \ref{theorem:superset} illustrates that if we can achieve the desired prediction by replacing abnormal features in $\mathcal{A}$, there is no need to change more abnormal features. The theorem tends to produce sparser results at a lower cost. Theorem \ref{theorem:subset} demonstrates that if we cannot achieve the desired prediction by changing abnormal features in $\mathcal{B}$, there is no need to check the satisfiability of any subsets of $\mathcal{B}$. 

To prune as many as subsets at a time with these two theorems, we need to find the minimal satisfiable subset (MSS) and the maximal unsatisfiable subset (MUS), shown as boxes filled with green/red background in Figure \ref{fig:example}. 
Next, we introduce two algorithms to achieve this: Grow($\cdot$) in Algorithm \ref{alg:grow} and Shrink($\cdot$) in Algorithm \ref{alg:shrink}. The Grow($\cdot$) algorithm starts with an arbitrary unsatisfiable subset $\hat{\mathcal{A}}$ and iteratively attempts to change other abnormal features until a maximal unsatisfiable subset is found. The Shrink($\cdot$) algorithm starts with an arbitrary satisfiable subset and iteratively attempts to reserve some features until a minimal satisfiable subset is found. Note that Grow($\cdot$) and Shrink($\cdot$) algorithms serve two plugins in our method, which can be replaced by any advanced algorithm with the same purpose. 

\begin{algorithm}[!t]
\caption{Grow($\mathcal{A}$)}\label{alg:grow}
\begin{algorithmic}[1]
\Require An unsatisfiable subset $\mathcal{A}$.
\Ensure A maximal unsatisfiable subset $\hat{\mathcal{A}}$.
\State $\hat{\mathcal{A}} = \mathcal{A}$
\For {$i\in \mathcal{N}_0 \backslash \mathcal{A}$}
    \If {$f(r(\mathbf{x}, \hat{\mathcal{A}} \bigcup \{i\})) == 0$}
        \State $\hat{\mathcal{A}} = \hat{\mathcal{A}} \bigcup \{i\}$
    \EndIf
\EndFor
\State \Return $\hat{\mathcal{A}}$
\end{algorithmic}
\end{algorithm}

\begin{algorithm}[!t]
\caption{Shrink($\mathcal{A}$)}\label{alg:shrink}
\begin{algorithmic}[1]
\Require A subset $\mathcal{A}$ satisfying the desired prediction.
\Ensure A minimal subset $ \mathcal{A}^*$ for a CFE. 
\State $\mathcal{A}^* = \mathcal{A}$
\For {$i\in \mathcal{A}$}
    \If {$f(r(\mathbf{x}, \mathcal{A}^* \backslash \{i\})) == 1$}
        \State $\mathcal{A}^* = \mathcal{A}^* \backslash \{i\}$
    \EndIf
\EndFor
\State \Return $\mathcal{A}^*$
\end{algorithmic}
\end{algorithm}

Further, we introduce how to solve it under Boolean satisfiability problem \cite{liffiton2013enumerating,bendik2018recursive}. In particular, any subset can be converted to a satisfiable Boolean assignment under a set of propositional logic formulas in conjunctive normal form (CNF), i.e., $\mathcal{A} \iff \mathbf{m}: \text{CNF}=True$. For example, for a subset $\mathcal{A} = \{1, 2\}$ in Figure \ref{fig:example}, we can write the $\text{CNF}$ in the following equation, and $[1,1,0,0]$ is the only solution. 
\begin{equation}
    \text{CNF} = \mathbf{m}^1 \land \mathbf{m}^2 \land \lnot \mathbf{m}^3 \land \lnot \mathbf{m}^4
\end{equation}

By employing this approach, the explicit materialization of all subsets can be avoided, thereby mitigating the exponential space complexity. The crux lies in devising the appropriate propositional logic formulas.

Our complete algorithm is shown in Algorithm \ref{alg:mscf}. 
Initially, in line $1$, we merely forbid the changes on normal features and any possible binary assignments on abnormal features can satisfy the $\text{CNF}$. $\text{getMask}(\text{CNF})$ uses an SAT solver to return a solution satisfying the CNF in line $3$. In our paper, we adopt the Z3 package\footnote{https://github.com/Z3Prover/z3}.   In line $7$, we convert the binary vector $\mathbf{m}$ to indices of subsets. Next, we check whether replacing features in this subset can achieve the desired prediction. If not satisfying the desired prediction, we call the Grow($\cdot$) algorithm to find the maximal unsatisfiable subset and then prune all subsets of it. The prune operation is achieved by the following propositional Boolean formulas which are conjugated into the existing CNF.
\begin{equation}
    \text{PruneSubSet}(\hat{\mathcal{A}}) = \vee _{i: i \in \mathcal{N}_0 \backslash \hat{\mathcal{A}}}\  \mathbf{m}^i
\end{equation}
Similarly, if we find a subset satisfying the desired prediction, we call Shrink($\cdot$) function to return a minimal satisfiable subset, which can induce a CFE with minimal perturbations of features. Then, we prune all supersets of it with the following logic formula,
\begin{equation}
    \text{PruneSuperSet}(\mathcal{A}^*) = \vee _{i: i \in \mathcal{A}^*}\ \lnot \mathbf{m}^i
\end{equation}
If no solution satisfies the CNF in the current iteration, we stop our algorithm in the line $4$ and $5$. 

\textbf{Correctness.} In our algorithm, each subset is either evaluated to be an MSS/MUS or pruned by an MSS/MUS. Hence, our algorithm will return all minimal CFEs, providing the same solutions as the brute-force search. 

\textbf{Space Complexity.} The space complexity depends on how many minimal CFEs are returned. The worst case is $O(\binom{d_0}{\lfloor d_0/2 \rfloor})$, which corresponds to that feature replacements on arbitrary abnormal features of size $\lfloor d_0/2 \rfloor$ are satisfiable. 

\textbf{Time Complexity.} The runtime of our method primarily depends on two parts: (1) a solver (line 3) that takes a set of constraints and returns a mask $\mathbf{m}$. The solver is considerably faster than calling the pretrained model. (2) evaluating the prediction on a subset.  Compared with brute-force search, which calls the deep model $2^{d_0}$ times to check all $2^{d_0}$ possible sets, our method reduces the number of calls on the model by pruning on certain subsets and supersets. Therefore, the empirical running time will decrease. 

\begin{algorithm}[t]
\caption{CEMSP($\mathbf{x}, f$)}\label{alg:mscf}
\begin{algorithmic}[1]
\Require An input $\mathbf{x}$, a pretrained model $f$.
\Ensure All minimal subsets $\mathcal{A}^*$ for CFEs.
\State $\text{CNF} = \land _{i: i \in \mathcal{N}_1}\ \lnot \mathbf{m}^i$
\While{True}
\State $\mathbf{m} \leftarrow \text{getMask}(\text{CNF})$
\If {not $\mathbf{m}$} \Comment{No assignment $\mathbf{m}$ returned.}
\State \textbf{Break}
\EndIf
\State {$\mathcal{A} \leftarrow \{i \in \mathcal{N}_0: \mathbf{m}^i = 1\}$ }
\If{$f(r(\mathbf{x}, \mathcal{A})) == 0$}
\State $\hat{\mathcal{A}} \leftarrow \text{Grow}(\mathcal{A})$
\State $\text{CNF} \to \text{CNF} \wedge \text{PruneSubset}(\hat{\mathcal{A}})$ \Comment{Prune any subset of $\hat{\mathcal{A}}$}
\Else
\State $\mathcal{A}^* \leftarrow \text{Shrink}(\mathcal{A})$
\State $\text{\textbf{yield}  }\mathcal{A}^*$
\State $\text{CNF} \to \text{CNF} \wedge \text{PruneSuperset}(\mathcal{A}^*)$ \Comment{Prune any superset of $\mathcal{A}^* $}
\EndIf
\EndWhile
\end{algorithmic}
\end{algorithm}

\subsection{Compatibility with Other Constraints}

The major theme of recent research is to model various constraints into CFE generation. Here, we show how to write these constraints by propositional logic formulas that can be conjugated into the CNF in line $1$ of our Algorithm \ref{alg:mscf}. 

\textbf{Immutable features.} Considering some features are immutable (e.g., race, birthplace), CFEs should avoid perturbations on these features. To achieve this, we add the following Boolean logic formula for a set of immutable features $\mathcal{I}$,  
\begin{align}
    \text{Actionability}(\mathcal{I}) = \land_{i\in \mathcal{I}}\ \lnot \mathbf{m}^i
\end{align}
Accordingly, these features should be ignored in Grow($\cdot$) algorithm when it searches for the maximal unsatisfiable set. Alternatively, we can directly treat immutable features as normal features to avoid any changes to them. 

\textbf{Conditional immutable features.} These features must change in one direction, e.g., education degree.  We can examine whether moving a feature value into its normal range follows the valid direction. If violating the valid direction, we treat this feature as an immutable feature, otherwise, we put no restriction on this feature. 

\textbf{Causality.} In practice, changing one feature may cause a change in other features. Such causal relations among features are generally written by a set of triplets in the structural causal model (SCM) \cite{pearl2009causality}, that is, $\mathcal{M} = \langle U, V, F \rangle$, where $U$ are exogenous features, $V$ are endogenous features, and $F:U \to V$ is a set of functions that describe how endogenous features are quantitatively affected by exogenous features. To adapt causality to our method, we only keep these triplets that normal exogenous features lead to normal endogenous features as our method merely considers discrete feature changes (from abnormal to normal). For example, feature $\mathbf{x}^1$ is an exogenous feature that affects two endogenous features $\mathbf{x}^2$ and $\mathbf{x}^3$, and $\mathbf{x}^2$ and $\mathbf{x}^3$ become normal in the consequence of its normal ancestor feature $\mathbf{x}^1$.  In this example, we can add two material conditions in the following that restrict the feature change of CFEs to follow the causal relations. 
\begin{align}
    \text{Causality} = (\lnot \mathbf{m}^1 \vee \mathbf{m}^2) \land (\lnot \mathbf{m}^1 \vee \mathbf{m}^3)
\end{align}
At the same time, Grow($\cdot$) and Shrink($\cdot$) algorithms should be updated to satisfy such causal relations when they attempt to add/remove a feature. This can be easily implemented by storing these causal relations by an inverted index where an entry is an exogenous feature and the inverted list contains all its endogenous features. 

\textbf{Correlation.} Correlation can be regarded as bidirectional causal relations. For example, if features $\mathbf{x}^1$ and $\mathbf{x}^2$ are correlated and in the normal range simultaneously, we can write the correlation between $\mathbf{x}^1$ and $\mathbf{x}^2$ as, 
\begin{align}
    \text{Correlation} = (\lnot \mathbf{m}^1 \vee \mathbf{m}^2) \land (\lnot \mathbf{m}^2 \vee \mathbf{m}^1) 
\end{align}

The great advantage of our framework is that it allows us to insert these constraints gradually and flexibly, as the complete relation graphs (e.g., full causal graph) are often difficult to derive in the beginning.

\section{Experiments}
In this section, we undertake a quantitative comparison between our proposed method \shortname\ and state-of-the-art approaches. Additionally, we demonstrate empirical examples of counterfactual explanations that effectively integrate practical constraints. The source code is available at the GitHub repository\footnote{https://github.com/wangyongjie-ntu/CEMSP}. 

\paragraph{Datasets.}
We conducted a comprehensive series of experiments involving a synthetic dataset and two real-world UCI medical datasets. Notably, the medical datasets encompass diagnostic features with well-defined and clinically significant normal ranges.

\begin{itemize}
    \item \textbf{Synthetic Dataset} is a binary class dataset consisting of $20,000$ samples with $4$ features. Each feature is sampled from the normal distribution independently. Regarding label balance, the binary label $y$ is assigned a value of $1$ when the following equation is satisfied; otherwise, $y$ is set to 0:
    \begin{align*}
        (\mathbf{x}^1 > 0.5) \lor (\mathbf{x}^2 > 0.4 \land \mathbf{x}^3 > 0) \lor (\mathbf{x}^2 > 0.4 \land \mathbf{x}^3 > 0.5)
    \end{align*}
    We set the lower values of normal ranges of four features as $[0.55, 0.45, 0.05, 0.55]$ for a higher confidence prediction.
    \item \textbf{UCI HCV Dataset} \cite{diaconis1983computer}. This dataset contains $615$ instances. Following \cite{bhatt-2021}, we convert $5$ categories of diagnosis into binary classes. After label conversion, the dataset consists of $75$ individuals diagnosed with HCV and $540$ individuals labeled as healthy.  Next, we remove the ``Age'' and ``Sex'' and keep the other $10$ medical features with normal ranges.  We adopt the tight normal ranges from laboratory tests in \cite{hoffmann2016simple} as certain normal ranges depend on ``Sex'' and we remove the ``Sex'' attribute in preprocessing.
    \item \textbf{UCI Thyroid Dataset} \cite{10.5555/33429.33438}. The raw dataset contains $3,772$ instances where each instance is described by $15$ features and labeled as either hypothyroid or normal class. We retain the most discriminative features ``FTI'', ``TSH'', ``T3'', ``TT4'' that have meaningful normal ranges and remove other features. Subsequently, we drop certain rows with missing values. The final dataset consists of $223$ patients and $2530$ healthy users. Normal ranges of ``TSH'', ``T3'', ``TT4'' are from laboratory tests in \cite{joshi2011laboratory}. As we do not find the normal range of ``FTI'' that matches the ``FTI'' values in this dataset, we simply choose the 1-sigma interval of ``FTI'' of the normal group.
   
\end{itemize}

\paragraph{Evaluation Metrics. }
To comprehensively compare over CFEs across various approaches, we employ the following evaluation metrics.

\begin{itemize}
    \item \textbf{Inconsistency}. We propose to adopt a modified Hausdorff distance \cite{dubuisson1994modified,jesorsky2001robust} to measure the inconsistency between two sets of CFEs $\mathcal{C}$ and $\mathcal{C}'$,  
    \begin{align}
        H(\mathcal{C}, \mathcal{C}') = \max(h_{mod}(\mathcal{C}, \mathcal{C}'), h_{mod}(\mathcal{C}', \mathcal{C}))
    \end{align}
    where $h_{mod}(\mathcal{C}, \mathcal{C}') = \frac{1}{|\mathcal{C}|} \sum_{\mathbf{c} \in \mathcal{C}}  \min_{\mathbf{c}'\in \mathcal{C}'} ||\mathbf{c} - \mathbf{c}'||_2$ and the lower $H$ is better. 
    \item \textbf{Average Percentile Shift (APS)} \cite{pawelczyk2020learning} measures the relative cost of perturbations of CFEs, 
    \begin{align}
        \text{APS}(\mathbf{x}, \mathcal{C}) = \frac{1}{d*|\mathcal{C}|} \sum_{\mathbf{c} \in \mathcal{C}} \sum_{i=1}^d |Q^i(\mathbf{c}^i) - Q^i(\mathbf{x}^i)|
    \end{align}
    where $Q^i(\cdot)$ denotes the percentile of the $i$-th feature value relative to all values of the feature in the whole data set. A lower score is favored. 
    \item \textbf{Sparsity}. It measures the percentage of features that remain unchanged and we prefer higher sparsity,
    \begin{align}
        \text{Sparsity}(\mathbf{x}, \mathcal{C})= \frac{1}{d*|\mathcal{C}|}\sum_{\mathbf{c} \in \mathcal{C}} \sum_{i=1}^d \mathds{1}_{\mathbf{c}^i =\mathbf{x}^i}. 
    \end{align}
    \item \textbf{Diversity}. We consider two diversity metrics, named Diversity, which is introduced in \cite{mothilal2020explaining}, and count-diversity (named C-Diversity for abbreviation), which is defined in Eqn ~\eqref{eq:cdiversity} in Section \ref{sec:41}, to measure the discrepancy within returned solutions. 
    \begin{align}
        \text{Diversity} = \frac{2}{k(k-1)} \sum_{i=1}^{k-1} \sum_{j=i+1}^k \text{dist}(\mathbf{c}_i, \mathbf{c}_j)
    \end{align}
    where $\text{dist}(\cdot, \cdot)$ represents the $L_1/MAD$ and $k$ is the number of CFEs.  
\end{itemize}

\paragraph{Baselines.}
We compare our method with the following baseline methods. 
\begin{itemize}
    \item \textbf{GrowingSphere (GS)} \cite{lash2017generalized}. This algorithm searches for CFEs from random samples in the sphere neighborhood of the input. The radius of the sphere grows until a CFE is found. It adopts the postprocessing on returned CFEs to make sparser solutions. 
    \item \textbf{PlainCF} \cite{s2017counterfactual}. It minimizes the objective in Eqn. \eqref{eq:lagrangian} with gradient descent. We run this algorithm from a random initial point and stop when an iteration threshold is reached or the loss difference is below a specified threshold.
    \item \textbf{CFProto} \cite{van2019interpretable}. It adds a prototype term to restrict that CFEs should resemble the prototype of the desired class. In our experiment, we set the prototype as the closest endpoints of normal ranges of these abnormal features, that is, $r(\mathbf{x}, \mathcal{N}_0)$. 
    \item  \textbf{DiCE} \cite{mothilal2020explaining}. Compared with PlainCF, it considers the diversity constraint that is modeled by a $dpp(\cdot)$ term over a set of CFEs.
    \item  \textbf{SNS} \cite{black2022consistent}. It finds a CFE with higher confidence and lower Lipschitz constant in the neighborhood of a given CFE, to produce consistent prediction under model update.
\end{itemize}
 
\paragraph{Experiment configurations.}
We first randomly split the datasets into train/test sets at the ratio of $7:3$ and normalize all features by a standard scaler on two UCI datasets (no feature normalization on the synthetic dataset). Then, we train a 3-layer Multilayer perceptron (MLP) model $f$ with Adam optimizer. The test accuracies are 99\%, 96\%, and 98\% on three datasets correspondingly. As we intend to convert unhealthy patients to healthy ones, we produce CFEs for all correctly classified patients in test sets for two UCI datasets. For saving time, we only produce CFEs for 100 random true negative samples in the synthetic dataset. 

Our method produces a set of CFEs of varied size $k$, while GS, PlainCF, CFProto, and SNS generate one at a time. We run GS, PlainCF, and CFProto $k$ times for fair comparisons to generate the same number of CFEs as ours. For DiCE, we directly keep the size of the diverse set as ours. We evaluate all CFE methods $g(f, \mathbf{x})$ under the following two kinds of slight updates to measure the algorithm robustness. 

\begin{itemize}
    \item \label{setting:1} Inputs are fixed, and we produce two sets of CFEs from two models that are trained on the same dataset with different initializations. 
    \item \label{setting:2} Model is fixed, and we produce two sets of CFEs from an input $\mathbf{x}$ and its perturbed instance $\mathbf{x}'$, where $\mathbf{x}'=\mathbf{x}+ \alpha$ and $\alpha$ is the random noise sampling from a Gaussian distribution  $\mathcal{N}(0, \sigma)$. In our experiments, $\sigma \in \{0.0001, 0.001, 0.01, 0.1\}$.  
\end{itemize}

\begin{figure}[!tp]
    \centering
    \includegraphics[width=0.95\linewidth]{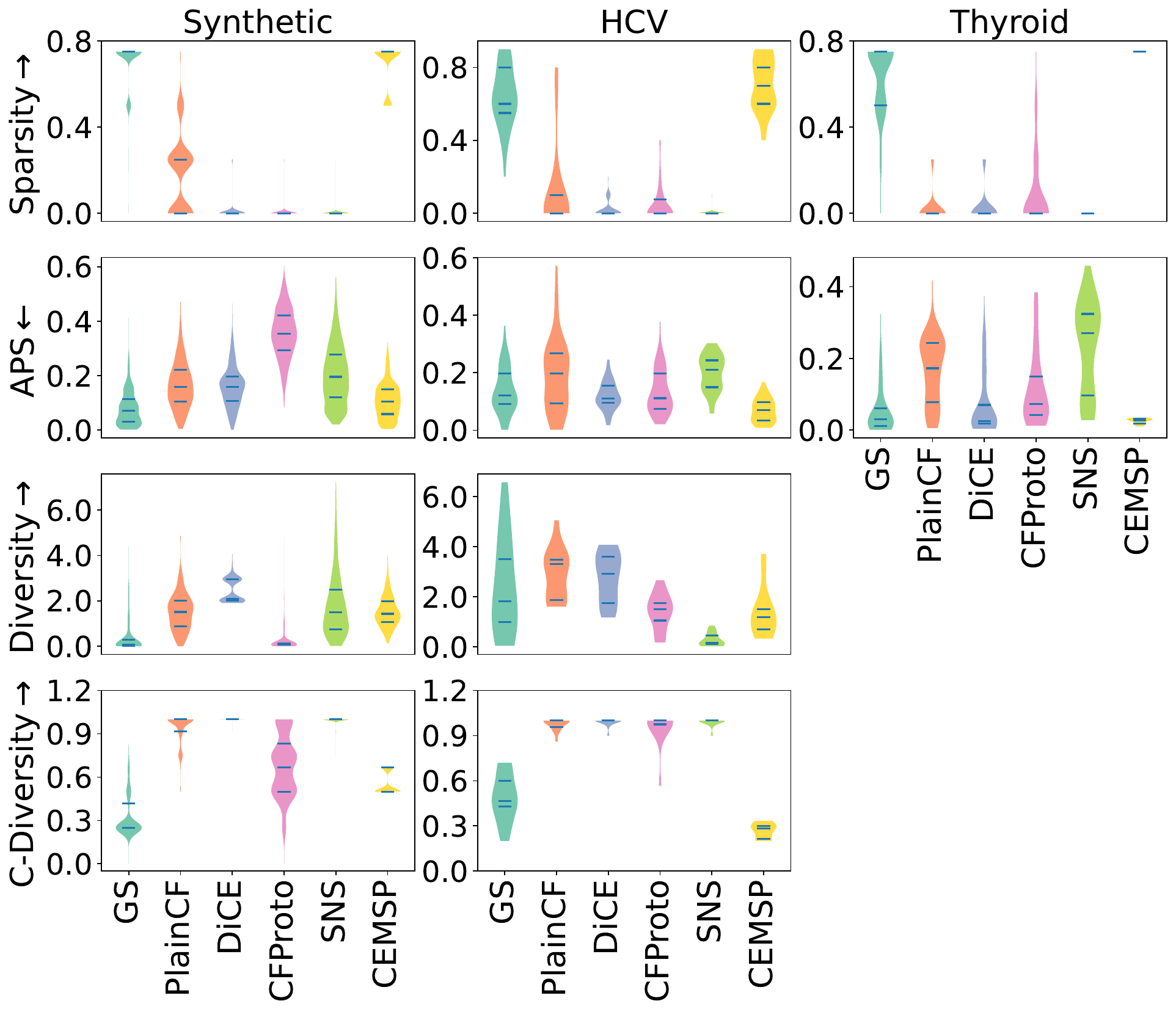}
    \caption{Evaluation of sparsity, APS, Diversity, and C-Diversity over three datasets. The $\uparrow$/$\downarrow$ means the higher/lower score is better. Diversity and C-Diversity of the Thyroid dataset are missing as our method \shortname\ only produces a single counterfactual explanation.}
    \label{fig:expscore}

    \includegraphics[width=0.95\linewidth]{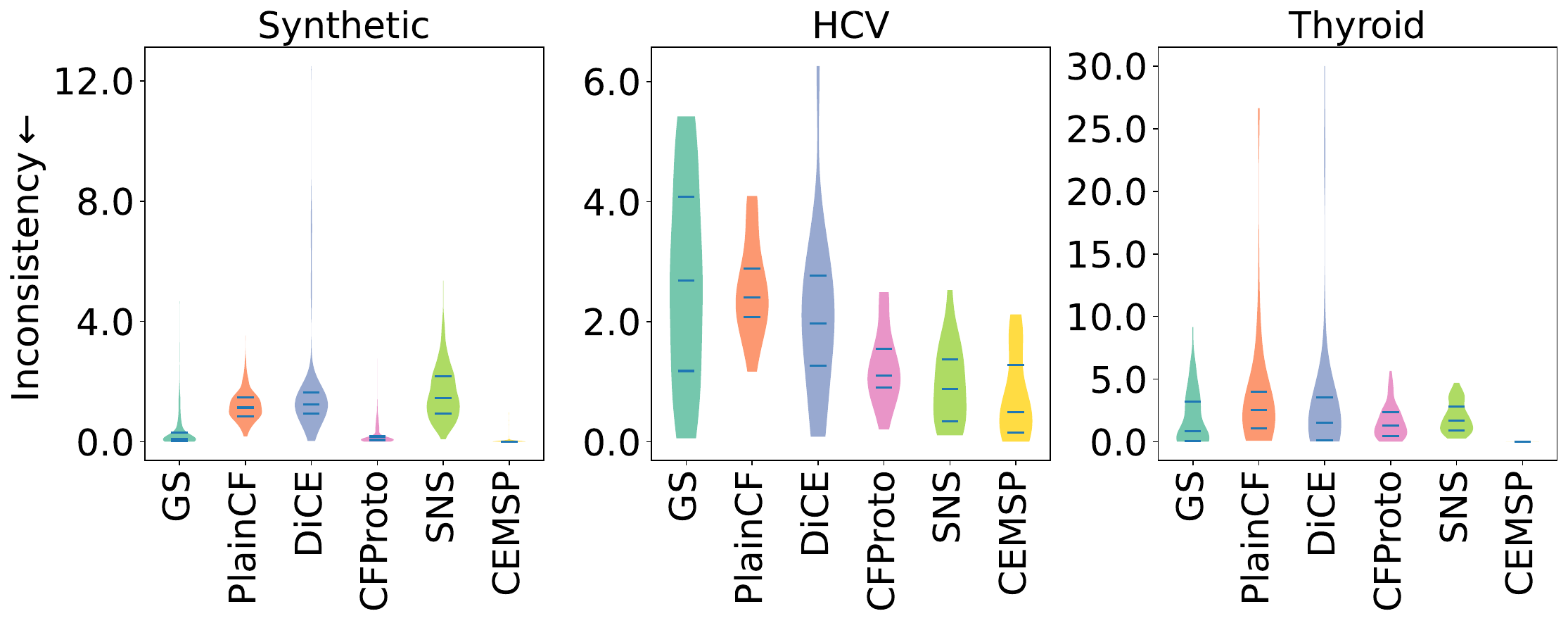}
    
    \caption{Evaluation of inconsistency score of model retraining.}
    \label{fig:inconsistency_s1}

    \includegraphics[width=0.95\linewidth]{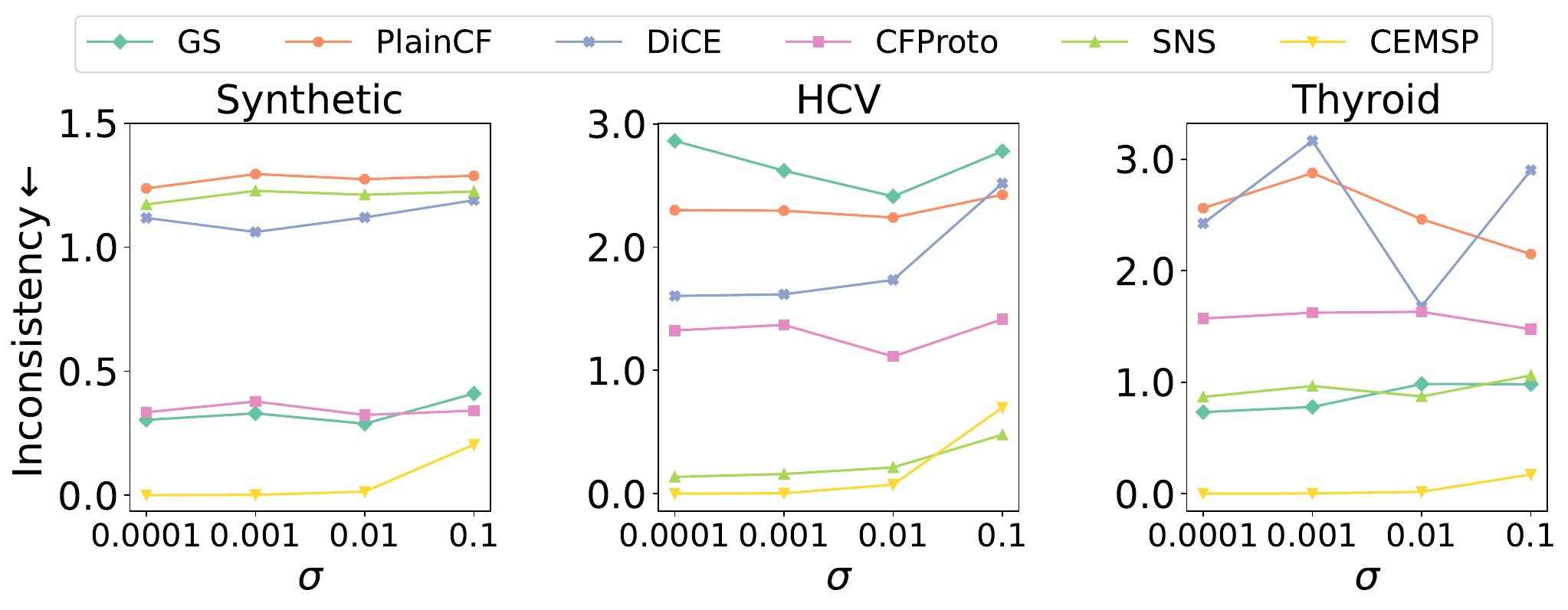}
    \caption{Evaluation of inconsistency score of input perturbations. $x$-axis represents the standard deviation $\sigma$ of added Gaussian noise.}
    \label{fig:inconsistency_s2}
\end{figure}

\subsection{Quantitative Evaluation}

We first report the quantitative comparison of sparsity, APS, Diversity, and C-Diversity in Figure \ref{fig:expscore}, where the $x$-axis denotes a counterfactual explanation generation method, the $y$-axis represents the average score of each metric of all evaluated instances. We can see that 
our method achieves the competitive sparsity as GS. GS achieves sparsity through post-preprocessing techniques, whereas our method \shortname\ focuses on making minimal modifications to subsets of abnormal features. In contrast, other methods do not explicitly optimize for sparsity and consequently fall behind in this aspect.
For the APS, CEMSP is slightly better. This result is grounded in two key considerations: firstly, we substitute an abnormal feature with its closest endpoints and secondly, we aim to change the minimal number of abnormal features. It is worth noting that although PlainCF minimizes the $L_1/MAD$ distance in its objective, this does not equate to minimizing the APS since APS is a density-aware metric among the population. Our method achieves at least $\frac{2}{d}$ C-Diversity as expected. In contrast to sparsity, C-Diversity sums up the fraction of features that are different between any two CFEs. Therefore, the method with a higher sparsity often associates with a lower C-Diversity. As a result, our CEMSP appears to have less competitive C-Diversity than methods that make simultaneous changes on many numerous features. However, our CEMSP has a competitive Diversity score defined in \cite{mothilal2020explaining}. 

Figure \ref{fig:inconsistency_s1} and \ref{fig:inconsistency_s2} report inconsistency scores under model retraining and input perturbations. Our CEMSP exhibits superior performance compared to other baseline methods. Specifically, GS, plainCF, and DiCE yield the poorest results when consistency restrictions are not enforced. CFProto, which incorporates a prototype term, achieves a better inconsistency score than PlainCF by directing all CFEs towards the prototype. As discussed earlier, our findings demonstrate that SNS does not perform well in generating CFEs with consistent feature values, despite having CFEs that yield consistent model predictions. In summary, our CEMSP outperforms the baseline methods across the aforementioned metrics, establishing its overall superiority.

\subsection{Use-Case Evaluation}
Next, we use the use-case evaluation in Figure \ref{fig:hcv_usecase} to present the compatibility of our method. The input instance is a patient in the HCV dataset who has the undesired prediction. Without any constraint, our method generates 4 CFEs, as illustrated in the table located at the bottom left. By introducing additional constraints, our model can effortlessly generate new CFEs that meet the desired criteria. For example, if we want to keep the original value of $BIL$, we can easily incorporate the CNF ($\lnot \mathbf{m}^5$), leading to CFEs that solely modify the remaining features. Furthermore, domain knowledge reveals that $ALT$ and $AST$ are correlated \cite{shen2015correlation}. Consequently, we can incorporate correlation constraints that limit simultaneous changes to both features. This can be achieved by including the CNF $(\lnot \mathbf{m}^3 \land \mathbf{m}^4) \vee (\mathbf{m}^3 \land \lnot \mathbf{m}^4)$ in our method, effectively enforcing the desired correlation constraint. Although this use-case evaluation may not yet be fully applicable to real-life scenarios, it offers valuable insights and demonstrates the potential to accommodate more practical considerations. 

\begin{figure}
    \centering
    \includegraphics[width=\linewidth]{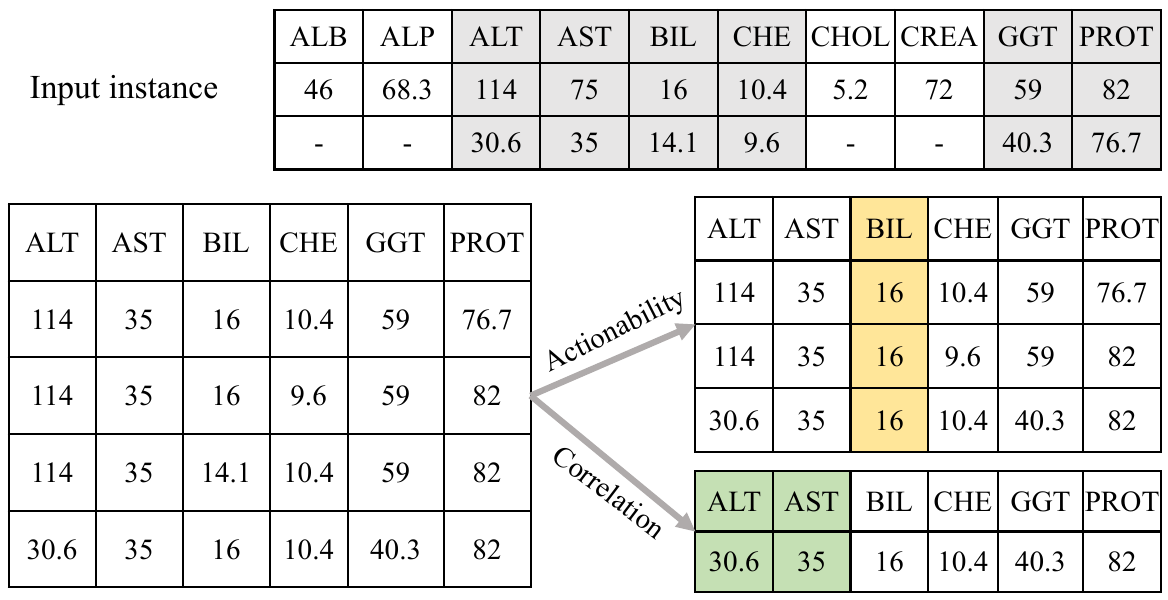}
    \caption{Use-case evaluation of a patient in the HCV dataset. The top table presents the original feature values of the patient, while the shaded features represent the altered features of CFEs and their closest endpoints. 
    The tables below display CFEs before/after incorporating constraints. To incorporate actionability and correlation, we introduced the expressions $\lnot m^5$ and $(\lnot \mathbf{m}^3 \land \mathbf{m}^4) \vee (\mathbf{m}^3 \land \lnot \mathbf{m}^4)$, respectively.}
    \label{fig:hcv_usecase}
\end{figure}

\section{Conclusion}
Lacking robustness in counterfactual explanations can undermine both individual fairness and model reliability. 
In this work, we present a novel framework to generate robust and diverse counterfactual explanations (CFEs).
Our work leverages the feature normal ranges from domain knowledge and generates CFEs that replace the minimal number of abnormal features to the closest endpoints of their normal ranges. We convert this problem into the Boolean satisfiability problem and solve it with modern SAT solvers. Experiments on both synthetic and real-life datasets demonstrate that our generated CFEs are more consistent than baselines while preserving flexibility for user preferences. 

\section{Limitations and Future Work}

While our work offers the potential to address the non-robustness issue through the utilization of domain knowledge, such as the normal ranges in healthcare and finance, some limitations hinder its applicability in broader contexts. Firstly, the scalability of the proposed method is underestimated. SAT solvers exhibit exponential complexity in the worst-case scenario. When dealing with a substantial number of features, the time required to find a binary mask from an SAT solver may surpass that of a forward pass in the DNN model. This concern can be addressed through empirical comparisons of the execution time between the SAT solver and DNN model revoking. Secondly, our approach is not directly applicable to scenarios where a portion of normal ranges is unknown. It might be necessary to incorporate additional information to determine the appropriate replacement values for these features. Thirdly, our study is established on binary classification tasks. However, the direct adaptation of our method to multi-class classification or regression tasks remains challenging. Normal ranges are typically contingent upon the target prediction. In the context of multi-class classification or regression, the target predictions can become intricate, rendering the normal ranges unattainable. 

In future work, our ultimate goal is to investigate robust and flexible counterfactual explanations in more general situations without any assumption about normal ranges. In addition, we intend to develop a sustainable system that offers users actionable recommendations and gathers valuable feedback to nourish continual enhancements. 

\begin{acks}
This research is supported, in part, by Alibaba Group through Alibaba Innovative Research (AIR) Program and Alibaba-NTU Singapore Joint Research Institute (JRI), Nanyang Technological University, Singapore. This research is also supported, in part, by the National Research Foundation, Prime Minister's Office, Singapore under its NRF Investigatorship Programme (NRFI Award No. NRF-NRFI05-2019-0002). Any opinions, findings and conclusions or recommendations expressed in this material are those of the authors and do not reflect the views of National Research Foundation, Singapore.
\end{acks}

\bibliographystyle{ACM-Reference-Format}
\balance
\bibliography{Ref}
\end{document}